%% file: main.tex
\documentclass[letterpaper]{article}
\usepackage[preprint]{aaai2027}
\usepackage[hyphens]{url}
\usepackage{graphicx}
\usepackage{natbib}
\usepackage{caption}
\usepackage{booktabs}
\usepackage{amsmath,amssymb}
\usepackage{amsthm}

\newtheorem{proposition}{Proposition}

\title{SpecAHD: Localize to Specialize for Automated Heuristic Design in Large-Scale Routing Problems}
\author{
Kezhao Lai\equalcontrib,\quad
Yutao Lai\equalcontrib,\quad
Hai-Lin Liu\corresponding
}
\affiliations{
School of Mathematics and Statistics, Guangdong University of Technology, China\\
\texttt{hlliu@gdut.edu.cn}
}

\begin{document}

\maketitle

\begin{abstract}
LLM-based automated heuristic design (AHD) typically scores executable programs on complete instances or within fixed solver components. In large-scale routing problems, localized reconstruction reduces the size of each optimization task, but repair regions within the same incumbent can exhibit substantially different structures. One construction rule must therefore compromise across them. In this paper, we propose \mbox{\textbf{SpecAHD}}, a coupled bilevel framework for within-instance specialization. An upper-level search learns where to expose bounded repair regions, while a lower-level search evolves a complementary repertoire of executable heuristics for the induced repair tasks. The upper-level program determines the repair tasks seen by the lower level, while checked repair outcomes determine how upper-level programs are evaluated. The lower-level objective favors heuristics that perform well on average or solve tasks that the current repertoire handles poorly. For the repair tasks induced by a fixed upper-level program and a fixed lower-level candidate pool, this objective is monotone submodular, allowing greedy repertoire selection with a \((1-1/e)\) approximation guarantee. Across four routing problems and multiple LLM backbones, SpecAHD reduces held-out objective cost by up to \(57.7\%\) against the strongest competing AHD baseline and outperforms the per-instance baseline envelope on most public instances.
\end{abstract}

\section{Introduction}
Large-scale routing problems are often improved through localized reconstruction rather than full re-solving. Large neighborhood search couples two decisions: which part of a feasible incumbent to expose and how to reconstruct it~\cite{shaw1998using,schrimpf2000record,ropke2006adaptive}. An executable constructive heuristic addresses only the second after a partial routing state is specified. Yet the decisions are interdependent: a strong repair rule may offer little on a low-potential region, while a mismatched rule may waste a promising one. The observed cost change therefore confounds region potential with repair suitability, making it unclear which decision an independent search should retain.

LLM-based AHD makes executable heuristics searchable. Representative methods such as FunSearch and Evolution of Heuristics (EoH) generate and refine executable programs through LLM-guided search~\cite{romera2024mathematical,liu2024evolution}. Most existing AHD systems select programs by aggregate performance on complete training instances. EoH-S extends this paradigm by retaining a small set of heuristics with complementary performance across instances~\cite{liu2026eohs}. That set addresses variation across complete instances. Our selection problem recurs within one solution, where each exposed region may call for a different repertoire member.

Recent systems automate neighborhood-search components beyond repair-heuristic generation. LLM-LNS evolves MILP neighborhood-exposure policies, G-LNS co-evolves destroy and repair populations using pairwise rewards, and full-component ALNS searches seven solver modules with a multidimensional elite archive~\cite{llmlns2025,zhao2026glns,yu2026fullcomponent}. These methods capture neighborhood quality, operator compatibility, or component behavior, but do not jointly index local-region structure and executable repair identity for inference. What remains missing is a mechanism that jointly learns which regions to expose and which executable heuristic should repair each exposed structure.

SpecAHD couples the two searches: the upper program induces bounded repair tasks, and the lower repertoire determines their value. Checked region--heuristic outcomes select complementary specialists and populate a Repair Response Archive for inference-time routing. Alternating finite-budget searches update the two sides, and every reconstructed solution is checked before replacing the incumbent.

We make the following contributions:
\begin{enumerate}
\item We introduce SpecAHD, a coupled bilevel AHD framework that moves specialization inside an instance: an upper program exposes bounded regions, and a lower repertoire supplies different constructive heuristics for different repair tasks.
	\item We connect evolution to inference through checked repair responses. A monotone-submodular objective retains complementary specialists, while a Repair Response Archive indexes the same region--heuristic evidence by local structure for later selection.
	\item We instantiate a checked repair interface for CVRP, TSP, VRPTW, and SDVRP. Under an exact problem-specific checker, sequential merge with rollback preserves complete-solution feasibility even when generated code fails.
\end{enumerate}
\section{Related Work}

\subsection{LLM Program Search for Heuristics}

LLM-based AHD searches over executable programs. FunSearch established LLM-guided mathematical-program search, and EoH jointly evolves design ideas and implementations~\cite{romera2024mathematical,liu2024evolution}. Later methods vary candidate development: ReEvo uses reflection, LLaMEA embeds generation in an evolutionary metaheuristic, and MCTS-AHD organizes refinement in a score-propagating tree~\cite{ye2024reevo,van2024llamea,Zheng2025}. All evaluate heuristic programs; SpecAHD uses this interface for two roles: upper programs expose repair tasks, and lower programs reconstruct them.

\subsection{Heuristic Sets, Archives, and Selection}

PartEvo clusters candidates by code or thought embeddings; HSEvo, MAP-Elites, and MEoH preserve behavioral or objective diversity~\cite{hu2025partevo,dat2025hsevo,mouret2015illuminating,kent2023bop,yao2025meoh}. These archives retain candidates, whereas SpecAHD archives observed heuristic--task responses. EoH-S selects a complementary set over complete-instance performance, and InstSpecHH maps instance features to heuristics~\cite{liu2026eohs,zhang2025instspechh}. SpecAHD instead specializes within an incumbent: its objective retains region-level complementarity, and a deterministic nonparametric router queries measured responses on structurally similar regions. Code distance and descriptor coverage receive no direct reward.

\subsection{Repair, Decomposition, and Large Neighborhood Search}

LNS, ruin-and-recreate, and adaptive LNS expose part of an incumbent solution and reconstruct it~\cite{shaw1998using,schrimpf2000record,ropke2006adaptive}. Recent LLM systems search parts of this interface. LLM-LNS evolves MILP neighborhood-selection strategies~\cite{llmlns2025}. G-LNS maintains destroy and repair populations and evaluates their pairwise synergy~\cite{zhao2026glns}. Full-component ALNS evolution searches seven modules, including operator selection and destroy-rate control, while MAP-Elites preserves distinct component behaviors~\cite{yu2026fullcomponent}. These methods learn a neighborhood policy, compatible operator populations, or a solver configuration. SpecAHD instead learns during evolution which executable repair program works for which local structure. Checked outcomes update this conditional relation as the upper-level search exposes new tasks and the lower-level search develops new repairs. The repertoire objective preserves specialists that improve tasks poorly handled by the current set, and the archive routes them when structurally similar regions appear again.

\section{SpecAHD Formulation}

\subsection{Problem Formulation}

Fix a routing problem class \(p\). Let \(x\in\mathcal X_p\) be an instance, \(\mathcal Y_p(x)\) its feasible solution space, and \(C_x:\mathcal Y_p(x)\rightarrow\mathbb R_{\geq0}\) the objective to be minimized. A baseline solver returns
\[
y_0=B(x)\in\mathcal Y_p(x).
\]
An exact checker \(F_p(x,y)\in\{0,1\}\) recognizes membership in \(\mathcal Y_p(x)\), with \(F_p(x,y_0)=1\). SpecAHD retains the complete solver and learns where to reopen \(y_0\) and how to reconstruct each opened part. We use \(\epsilon>0\) as a numerical safeguard.

The validator \(V_p\), descriptor \(\Phi_p\), reconstruction and merge operators \(R_p\) and \(M_p\), and checker \(F_p\) form the problem-specific interface. Given this interface, the alternating search, response-based repertoire objective, and archive protocol are shared across problem classes.

\subsection{Bilevel Repair Interface}
\label{sec:merge}

Given \(x\), a feasible incumbent \(y\in\mathcal Y_p(x)\), and a region budget \(K\), an upper program \(g\in\mathcal G_p\) proposes a raw region sequence. The validator returns at most \(K\) valid regions:
\[
\begin{aligned}
\widetilde{\mathbf z}&=g(x,y;K),\\
\mathbf z&=V_p(\widetilde{\mathbf z})=(z_1,\ldots,z_L),\qquad L\leq K.
\end{aligned}
\]
Each \(z_j=(I_j,b_j)\) contains a mutable support \(I_j\) and the boundary information \(b_j\) required for reconnection. The validator removes empty, duplicate, oversized, and locally invalid regions and enforces pairwise-disjoint supports: customers for CVRP and VRPTW, tour positions for TSP, and deliveries for SDVRP.

Let \(\mathcal H_p\) be the search space of lower programs. Each lower program \(h\in\mathcal H_p\) is an executable constructive heuristic. The lower-level search retains
\[
\begin{gathered}
H=\{h_1,\ldots,h_m\}\subseteq\mathcal H_p,\qquad m\leq q,\\
h_i:\sigma_\tau\mapsto a_\tau,
\end{gathered}
\]
where \(\sigma_\tau\) is the local construction state and \(a_\tau\) is the next node, customer, or delivery action. At step \(j\), let \(r_j=(x,y^{(j-1)},z_j)\). Executing \(h\) produces \(u_j=R_p(x,y^{(j-1)},z_j;h)\), or \(u_j=\bot\) if execution or parsing fails. The candidate merge and checked update are
\begin{equation}
\label{eq:checked_merge}
\begin{aligned}
\bar y_p(r_j,h) &= M_p(y^{(j-1)},z_j,u_j),\quad u_j\neq\bot;\\[4pt]
\Gamma_p(r_j,h) &=
\begin{cases}
\bar y_p(r_j,h), &
\substack{u_j\neq\bot\ \text{and}\\
F_p(x,\bar y_p(r_j,h))=1},\\
y^{(j-1)}, & \text{otherwise}.
\end{cases}
\end{aligned}
\end{equation}
Thus, any failed or infeasible merge leaves the incumbent unchanged. Unlike unit-test feedback for generated search components~\cite{cao2024autotos}, \(F_p\) checks the merged complete solution, including capacity and time-window effects beyond one support.

Given the validated order, a router selects \(h_{i_j}\in H\), and checked sequential repair updates
\[
\begin{aligned}
y^{(0)}&=y_0,\\
r_j&=(x,y^{(j-1)},z_j),\\
y^{(j)}&=\Gamma_p(r_j,h_{i_j}),\qquad j=1,\ldots,L.
\end{aligned}
\]
The final solution is \(\hat y=y^{(L)}\). For a single-region task \(r=(x,y,z_j)\), the checked response score of \(h\) is
\begin{equation}
\label{eq:repair_score}
s_{h,r}=
\frac{[C_x(y)-C_x(\Gamma_p(r,h))]_+}
{\max\{\epsilon,C_x(y)\}}\in[0,1].
\end{equation}

\begin{proposition}[Feasibility under checked sequential repair]
If \(y_0\in\mathcal Y_p(x)\) and \(F_p\) exactly recognizes \(\mathcal Y_p(x)\), checked sequential repair returns \(\hat y\in\mathcal Y_p(x)\) for any validated region sequence and lower-heuristic choices.
\end{proposition}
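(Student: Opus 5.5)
The plan is a straightforward induction on the step index \(j\), with invariant \(y^{(j)}\in\mathcal Y_p(x)\) for \(j=0,1,\ldots,L\). The base case is the hypothesis \(y^{(0)}=y_0\in\mathcal Y_p(x)\); equivalently \(F_p(x,y_0)=1\), as stated in the problem formulation. Since the validator returns \(L\leq K\) regions, the sequence is finite. The conclusion \(\hat y=y^{(L)}\in\mathcal Y_p(x)\) then follows once the inductive step is established. The argument uses no property of the region sequence \(\mathbf z\), the router, or the choices \(h_{i_j}\), beyond the fact that each step applies \(\Gamma_p\).

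For the inductive step, assume \(y^{(j-1)}\in\mathcal Y_p(x)\) and fix an arbitrary \(z_j\) and \(h_{i_j}\in H\). I will split on the two branches of the checked update in \eqref{eq:checked_merge}.

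\emph{First branch.} Here \(u_j\neq\bot\) and \(F_p(x,\bar y_p(r_j,h_{i_j}))=1\), so \(y^{(j)}=\bar y_p(r_j,h_{i_j})\). Because \(F_p\) exactly recognizes \(\mathcal Y_p(x)\), meaning \(F_p(x,y)=1\) if and only if \(y\in\mathcal Y_p(x)\), we get \(y^{(j)}\in\mathcal Y_p(x)\). Only the soundness direction of exactness is needed here.

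\emph{Second branch.} This covers every other outcome: execution or parsing failure (\(u_j=\bot\)), a merge that is infeasible, or a merge that is not even a well-formed solution. In all of these cases \(y^{(j)}=y^{(j-1)}\), which lies in \(\mathcal Y_p(x)\) by the inductive hypothesis.

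The only point needing care is that \(\Gamma_p\) is a total function. Every possible outcome of running an arbitrary generated program must fall into one of the two cases, including exceptions, nontermination cut off by a timeout, or malformed output. I would argue this by reading the definition: the \emph{otherwise} clause is the complement of the first condition, so it absorbs every outcome that is not a successfully parsed and checked merge. I would also note explicitly that \(F_p\) is evaluated on the complete merged solution. That way, feasibility does not rely on any guarantee about \(R_p\), \(M_p\), the disjointness enforced by \(V_p\), or about \(h\). Disjointness matters for repair quality, not for feasibility.
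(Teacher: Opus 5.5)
Your proposal is correct and matches the paper's proof: induction on $j$ with base case $y^{(0)}=y_0$, then a two-case split on Equation~\eqref{eq:checked_merge}, where the accepted branch is feasible because $F_p$ is exact and the rollback branch is feasible by the inductive hypothesis. Your further remarks, that $\Gamma_p$ is total and that only the soundness direction of $F_p$ is needed, are accurate refinements the paper leaves implicit.
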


\begin{proof}
The baseline solution \(y^{(0)}\) is feasible. If \(y^{(j-1)}\) is feasible, Equation~\eqref{eq:checked_merge} either accepts a solution certified by \(F_p\) or retains \(y^{(j-1)}\). Induction gives \(y^{(j)}\in\mathcal Y_p(x)\) for all \(j=0,\ldots,L\).
\end{proof}

\begin{table}[tb]
\centering
\small
\begin{tabular}{@{}p{0.15\linewidth}p{0.40\linewidth}p{0.33\linewidth}@{}}
\toprule
Problem & Upper-program output & Lower-program action \\
\midrule
CVRP & mutable customer supports & next customer \\
TSP & mutable tour positions & next node \\
VRPTW & mutable customer supports & feasible next customer \\
SDVRP & mutable delivery supports & next delivery action \\
\bottomrule
\end{tabular}
\caption{Problem-specific instantiations of the bilevel repair interface.}
\label{tab:two_layer_interface}
\end{table}

\subsection{Response Archive and Repertoire Selection}
\label{sec:response_archive_objective}

Because no single heuristic repairs every exposed structure well, we retain task-level responses rather than one aggregate score per heuristic. For a finite instance set \(\mathcal D\), \(g\) induces
\[
\mathcal R_g(\mathcal D)=
\{(x,B(x),z):x\in\mathcal D,\ z\in V_p(g(x,B(x);K))\}.
\]
For \(r=(x,y,z)\), let \(\phi_r=\Phi_p(x,y,z)\) be its structural descriptor and \(s_{h,r}\) its checked response score. Evaluating every candidate \(h\in\mathcal P_g\subseteq\mathcal H_p\) on every task in \(\mathcal R_g(\mathcal D_{\mathrm{train}})\) yields
\[
\mathcal S_g=
\{(r,\phi_r,h,s_{h,r}):r\in\mathcal R_g(\mathcal D_{\mathrm{train}}),\ h\in\mathcal P_g\}.
\]
We retain \(r\) because distinct tasks may share a descriptor. Completed tables accumulate in the Repair Response Archive:
\[
\mathcal A^{t+1}=\mathcal A^t\cup\mathcal S_{g^t},\qquad \mathcal A^0=\varnothing.
\]
The archive may be sparse for a later repertoire because later heuristics were not evaluated on every archived task. Repertoire selection uses the current dense table \(\mathcal S_g\); repair-time routing queries \(\mathcal A^t\).

Let \(\mathcal R_g=\mathcal R_g(\mathcal D_{\mathrm{train}})\), \(n_g=|\mathcal R_g|>0\), and \(\bar s_g(h)=n_g^{-1}\sum_{r\in\mathcal R_g}s_{h,r}\). With \(\max_{h\in\varnothing}s_{h,r}=0\), we select \(H\subseteq\mathcal P_g\), \(|H|\leq q\), by maximizing
\begin{equation}
\label{eq:lower_objective}
\begin{aligned}
J_g(H)={}&\frac{\beta}{q}\sum_{h\in H}\bar s_g(h)\\
&+\frac{1-\beta}{n_g}\sum_{r\in\mathcal R_g}\max_{h\in H}s_{h,r},
\qquad |H|\leq q.
\end{aligned}
\end{equation}
The first term rewards individual average quality; the second rewards taskwise coverage by the best selected response. Thus \(\beta\in[0,1]\) trades off quality and complementarity. As in heuristic-set design~\cite{liu2026eohs}, a candidate improves the second term only where it raises the pointwise response envelope; code distance is not rewarded explicitly.

For \(h\notin H\), the marginal gain is
\[
\begin{aligned}
\delta_g(h\mid H)
&=J_g(H\cup\{h\})-J_g(H)\\
&=\frac{\beta}{q}\bar s_g(h)
+\frac{1-\beta}{n_g}\sum_{r\in\mathcal R_g}
\left[s_{h,r}-\max_{h'\in H}s_{h',r}\right]_+.
\end{aligned}
\]
The ideal budget-\(q\) repertoire for this fixed dense response table is
\[
H^\star(g)\in\arg\max_{H\subseteq\mathcal P_g,\,|H|\leq q}J_g(H).
\]

\begin{proposition}[Greedy repertoire selection]
For fixed \(g\), \(\mathcal P_g\), and \(n_g>0\), \(J_g\) is nonnegative, normalized, monotone, and submodular. Greedy selection by \(\delta_g\) returns \(H^{\mathrm{gr}}\) satisfying
\[
J_g(H^{\mathrm{gr}})\geq(1-1/e)J_g(H^\star(g)).
\]
The selection problem is NP-hard when \(\beta=0\) and repair scores are binary.
\end{proposition}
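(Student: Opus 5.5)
I would prove the four structural properties by splitting \(J_g\) into its two terms and treating each as a standard set function on the finite ground set \(\mathcal P_g\). Write
\[
J_g(H)=\beta\,A(H)+(1-\beta)\,\Psi(H),
\]
where
\[
A(H)=\frac{1}{q}\sum_{h\in H}\bar s_g(h),\qquad
\Psi(H)=\frac{1}{n_g}\sum_{r\in\mathcal R_g}\max_{h\in H}s_{h,r}.
\]
The first term is modular with nonnegative weights, because Equation~\eqref{eq:repair_score} gives \(s_{h,r}\in[0,1]\) and hence \(\bar s_g(h)\ge 0\). The second term is a nonnegative average of facility-location (coverage) functions \(H\mapsto\max_{h\in H}s_{h,r}\), one per task \(r\). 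With the convention \(\max_{h\in\varnothing}s_{h,r}=0\), both terms vanish at \(\varnothing\), so \(J_g(\varnothing)=0\) (normalized), and \(J_g\ge 0\) everywhere. The coefficients \(\beta\) and \(1-\beta\) are nonnegative, and these properties are closed under nonnegative combinations. It therefore suffices to verify monotonicity and submodularity for each piece.

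For a single task \(r\), I would work directly with the marginal gain already computed in the paper, \([s_{h,r}-\max_{h'\in H}s_{h',r}]_+\ge0\), which gives monotonicity. For submodularity, take \(H\subseteq H'\) and \(h\notin H'\). The envelope satisfies \(\max_{H}s_{\cdot,r}\le\max_{H'}s_{\cdot,r}\), and the map \(t\mapsto[s_{h,r}-t]_+\) is nonincreasing, so \(\delta(h\mid H)\ge\delta(h\mid H')\). Summing over \(r\) and adding the constant modular gain \(\frac{\beta}{q}\bar s_g(h)\) yields \(\delta_g(h\mid H)\ge\delta_g(h\mid H')\). The approximation bound then follows by invoking the classical Nemhauser--Wolsey--Fisher theorem for maximizing a normalized monotone submodular function under a cardinality constraint \(|H|\le q\). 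Greedy selection by \(\delta_g\), stopping after \(q\) picks or when \(\mathcal P_g\) is exhausted, gives the \((1-1/e)\) factor. If \(|\mathcal P_g|\le q\), greedy returns all of \(\mathcal P_g\), which is optimal by monotonicity.

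For NP-hardness I would reduce from Maximum Coverage, which is NP-hard and contains Set Cover as a special case. Given a universe \(U\) with \(|U|=n>0\), a family \(S_1,\dots,S_m\), and a budget \(k\), I would set \(\mathcal R_g=U\), \(\mathcal P_g=\{h_1,\dots,h_m\}\), \(q=k\), \(\beta=0\), and \(s_{h_i,r}=\mathbf 1[r\in S_i]\). Then \(J_g(H)=|\bigcup_{h_i\in H}S_i|/n\), so deciding whether \(J_g\) reaches a threshold is exactly the coverage decision problem.

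The main obstacle is justifying that binary scores are realizable within the model. Scores are produced by checked repairs as in Equation~\eqref{eq:repair_score}, not chosen freely, so I need to argue that the selection problem is posed over an arbitrary dense response table. I would state hardness for the combinatorial selection problem with the table \((s_{h,r})\) given as input, and note that values in \(\{0,1\}\) lie in \([0,1]\), which is the range allowed by Equation~\eqref{eq:repair_score}. If a realizability argument is demanded, I would fall back to a remark that the abstract table is the input of the selection step. The reduction is polynomial, and the submodularity steps are routine.
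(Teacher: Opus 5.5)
Your proposal is correct and follows essentially the same route as the paper's proof: a nonnegative modular first term plus per-task envelope functions whose marginal gain \([s_{h,r}-\max_{h'\in H}s_{h',r}]_+\) can only shrink as the selected set grows, followed by the Nemhauser--Wolsey--Fisher greedy bound. Your hardness argument is the paper's Maximum Coverage reduction, with one task per element, one heuristic per subset, \(\beta=0\) and binary scores. Your remark that hardness concerns selection with the response table given as input matches how the paper implicitly treats it.
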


\begin{proof}
The first term is a nonnegative modular function. For each task, the second takes the largest selected response; adding a heuristic cannot decrease this maximum, and its marginal gain shrinks as the selected set grows. Since \(J_g(\varnothing)=0\), their nonnegative sum is normalized, monotone, and submodular. The cardinality-constrained greedy bound follows from Nemhauser, Wolsey, and Fisher~\cite{nemhauser1978analysis}. With \(\beta=0\) and binary scores, the objective reduces to Maximum Coverage, which is NP-hard~\cite{Garey1979}.
\end{proof}

This guarantee holds only for fixed \(g\), its induced task set, and the evaluated candidate pool; it does not imply convergence or global optimality of the alternating bilevel search.

\subsection{Coupled Design Objective}

The upper program determines \(\mathcal R_g\) and hence the dense response table used by the lower-level search; the selected repertoire in turn determines the repair quality used to evaluate \(g\). For the ideal objective, let \(\rho_{g,H}\) be a router fitted to \(\mathcal S_g\) and restricted to \(H\), and let \(\hat y(x;g,H,\rho_{g,H})\) be the complete solution returned by checked sequential repair. Its improvement is
\[
U_x(g,H)=
\frac{C_x(B(x))-C_x(\hat y(x;g,H,\rho_{g,H}))}
{\max\{\epsilon,C_x(B(x))\}}.
\]
Let \(N_{\mathrm{tr}}=|\mathcal D_{\mathrm{train}}|\). With \(\Omega(g,x)\geq0\) penalizing invalid proposals and rejected merges and \(\lambda\geq0\) controlling its weight, the ideal coupled objective is
\[
g^\star\in\arg\max_{g\in\mathcal G_p}
\frac{1}{N_{\mathrm{tr}}}\sum_{x\in\mathcal D_{\mathrm{train}}}
\left[U_x(g,H^\star(g))-\lambda\Omega(g,x)\right].
\]
A fixed partition would remove this coupling from the search. Changing \(g\) changes the lower-level training tasks, while changing \(H\) changes the measured value of \(g\). Exact optimization would require a fresh lower-level search and dense response table for every upper candidate. We therefore alternate the two searches: each round updates the repertoire and archive for the current upper program, then freezes this repair system while evaluating the next upper candidates. The implementation replaces \(\rho_{g,H}\) with the archive-based router \(\rho_{\mathcal A,H}\).

\section{Proposed Method}

\subsection{Framework Overview}

SpecAHD alternates between region exposure and repair search. An upper program induces bounded repair tasks; their dense response table selects a complementary repertoire, while the persistent archive supports region-conditioned routing. One training round follows
\(g^t\!\rightarrow\!\mathcal R_{g^t}\!\rightarrow\!\mathcal S^t\!\rightarrow\!(H^{t+1},\mathcal A^{t+1})\!\rightarrow\!g^{t+1}\).
The selected repertoire and updated archive are then frozen while checked complete-solution evaluations determine \(g^{t+1}\). Figure~\ref{fig:framework} summarizes training and frozen test-time execution.

\begin{figure*}[tb]
\centering
\includegraphics[width=\textwidth]{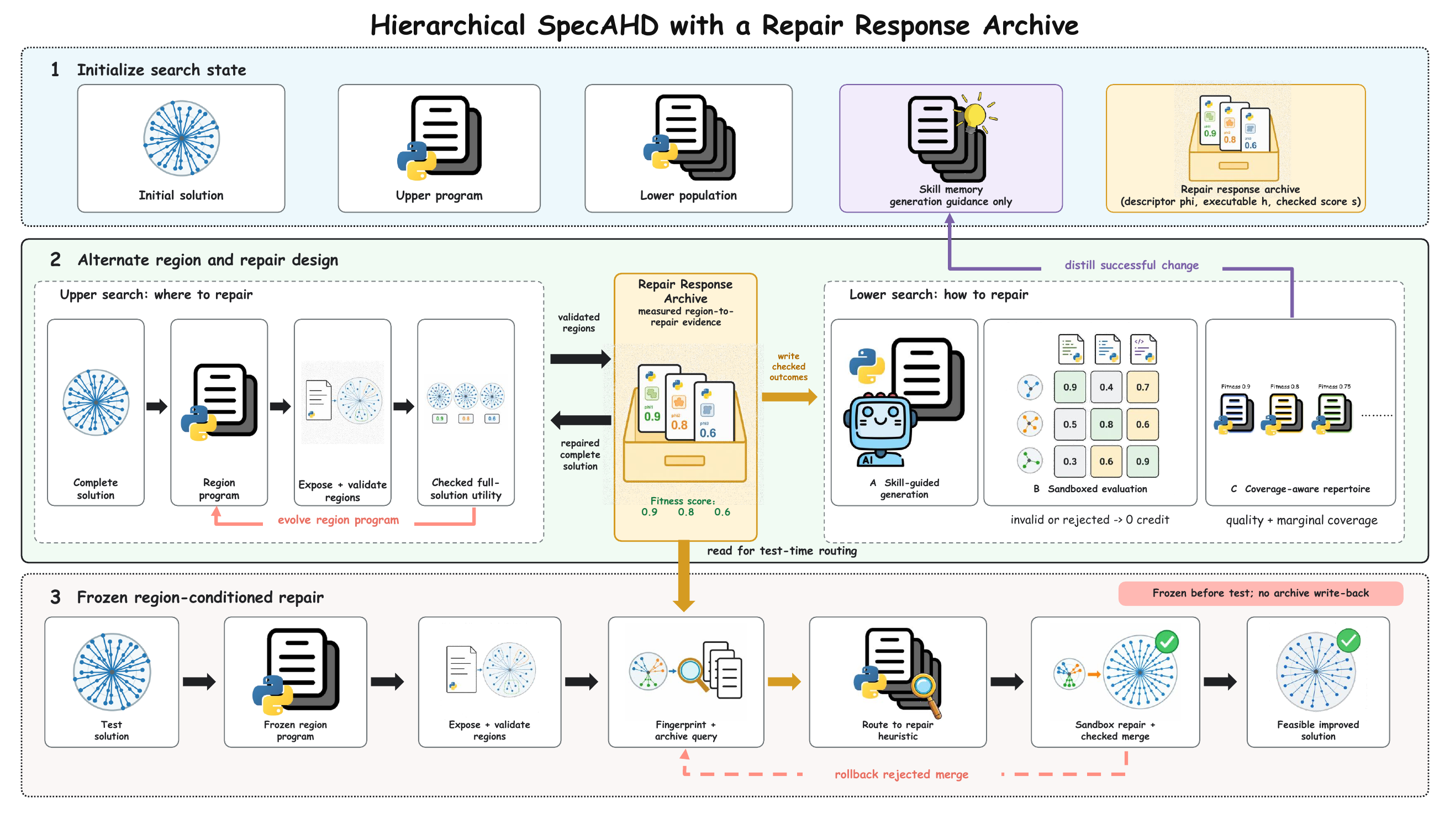}
\caption{Overview of SpecAHD. Training alternates upper-level region design and lower-level repair search. Checked task-level responses select a repertoire and update the archive. At test time all components are frozen; archive routing selects a retained heuristic for each exposed region, and checked merge rolls back failed or infeasible repairs.}
\label{fig:framework}
\end{figure*}

\subsection{Learning Where to Repair}

For an instance \(x\) and feasible incumbent \(y\), an executable upper program returns an ordered sequence of raw region proposals, \(\widetilde{\mathbf z}=g(x,y;K)\). The problem-specific validator removes proposals that violate the region interface and returns \(\mathbf z=V_p(\widetilde{\mathbf z})=(z_1,\ldots,z_L)\) before any repair program is called. Because \(g\) observes both the instance and the incumbent, the exposed regions can adapt to the instance structure and solution state rather than follow a fixed geometric partition.

Across the training set, these validated regions define the repair-task set \(\mathcal R_g\). The upper-level search decides where to repair, while the lower-level search reconstructs each selected region. Each lower candidate therefore receives an explicit repair task and only needs to choose construction actions within the supplied region.

Under the coupled objective, an upper program is rewarded only when the retained specialists turn its exposed regions into a better complete solution. To attribute this improvement to region selection, all upper candidates in round \(t\) use the same frozen \((H^{t+1},\mathcal A^{t+1})\): they may change the proposed regions, but not the repair system that scores them. Starting from \(y_0=B(x)\), let \(\hat y_g^t(x)\) denote the checked output of this frozen pipeline. The resulting upper utility is

\[
U_x^t(g)
=
\frac{C_x(y_0)-C_x(\hat y_g^t(x))}
{\max\{\epsilon,C_x(y_0)\}}.
\]
For \(N_{\mathrm{tr}}=|\mathcal D_{\mathrm{train}}|\), the upper-level search maximizes the finite-sample objective
\[
\begin{aligned}
\widehat J_U^t(g)=\frac{1}{N_{\mathrm{tr}}}
\sum_{x\in\mathcal D_{\mathrm{train}}}\big[&U_x^t(g)\\
&-\lambda\Omega(g,x)\big],
\end{aligned}
\]
For \(\widetilde{\mathbf z}=g(x,y_0;K)\) and \(\mathbf z=V_p(\widetilde{\mathbf z})\), define \(L=|\mathbf z|\) and let \(N_{\mathrm{rb}}\) be the number of rejected merges. With \(\alpha_1,\alpha_2,\alpha_3\geq0\), the penalty is
\[
\begin{aligned}
\Omega(g,x)={}&\alpha_1\mathbb{I}[L=0]\\
&+\alpha_2\frac{|\widetilde{\mathbf z}|-L}{\max\{1,|\widetilde{\mathbf z}|\}}\\
&+\alpha_3\frac{N_{\mathrm{rb}}}{\max\{1,L\}}.
\end{aligned}
\]
These terms penalize empty validated output, validator rejection rate, and rollback rate. Region size, overlap, and local structural constraints remain hard constraints enforced by \(V_p\).

Both searches use the EoH mutation and recombination interface; Appendix~G gives the generation contracts and failure handling~\cite{liu2024evolution}.

\subsection{Learning How to Repair}

Every executable lower candidate is evaluated on every repair task in the current round, yielding the dense response table \(\mathcal S^t\). A single average score would obscure task-specific strengths, so repertoire selection uses taskwise responses to preserve complementary behavior~\cite{liu2026eohs}. Candidates that fail to parse or launch in the sandbox are discarded; timeouts, invalid task outputs, and rejected merges receive a zero response score. Starting from \(H=\varnothing\), greedy selection repeatedly adds the candidate with the largest positive \(\delta_{g^t}(h\mid H)\) until no positive gain remains or \(|H|=q\). A candidate enters the repertoire only if it improves the measured set objective; code distance alone provides no reward.

Lower candidates follow the EoH parent-rewriting process~\cite{liu2024evolution}. Retained records summarize code changes, successful task conditions, and failures for later mutations~\cite{wu2025hercules}; they guide generation only, while checked responses determine repertoire selection and routing. Appendix~G gives the full prompt and record protocol.

\subsection{Conditioning Repair on Measured Responses}

The repertoire is useful only if the router can select the relevant specialist at each repair call. Applying one globally strongest heuristic to every region would ignore the response differences used to select \(H\). SpecAHD stores each checked record \((r,\phi_r,h,s_{h,r})\) in \(\mathcal A\). When a new region is exposed, the router queries these records. The same read-only query is used to score upper candidates with a frozen repair system and to repair test instances after search. Since records are collected under different upper programs and candidate pools, the archive can be sparse for a later repertoire. For a new region, the router estimates each heuristic's response from structurally similar archived tasks and excludes heuristic-task pairs that were never evaluated.

Before repairing \(z_j\), we compute its descriptor from the current feasible incumbent,
\[
\phi_j=\Phi_p(x,y^{(j-1)},z_j).
\]

Appendix~B lists each problem-specific descriptor together with its repair interface.

The descriptors are problem-specific and used only for archive lookup. The repertoire objective \(J_g\) remains based on checked responses. We standardize each coordinate with training statistics, group records by task identity, and define query-to-task similarity as
\[
\kappa_{j\ell}=
\max\!\left\{0,
\frac{\widetilde\phi_j^\top\widetilde\phi_\ell}
{\max\{\epsilon,\|\widetilde\phi_j\|_2\|\widetilde\phi_\ell\|_2\}}
\right\}.
\]
Let \(\mathcal N_k(\phi_j)\) contain the \(k\) nearest archived tasks, truncated to archive size. We set \(w_\ell=\kappa_{j\ell}\). If all similarities are zero, we instead use uniform weights. With observed-pair indicator \(o_{i\ell}\), define
\begin{equation}
\label{eq:response_router}
\begin{aligned}
d_i(\phi_j)&=\sum_{\ell\in\mathcal N_k(\phi_j)}o_{i\ell}w_\ell,\\
Q_i(\phi_j)&=
\begin{cases}
\displaystyle
\frac{\sum_{\ell\in\mathcal N_k(\phi_j)}o_{i\ell}w_\ell s_{i\ell}}
{d_i(\phi_j)}, & d_i(\phi_j)>0,\\[2mm]
\bar s_{\mathcal A}(h_i), & d_i(\phi_j)=0.
\end{cases}
\end{aligned}
\end{equation}
Here \(\bar s_{\mathcal A}(h_i)\) is the mean archived score of \(h_i\), with value zero when \(h_i\) has no archived records. The router selects the heuristic with the largest \(Q_i(\phi_j)\), breaking ties first by archive mean and then by fixed index. This deterministic weighted-neighbor rule requires no trained selector~\cite{cover1967nearest,alissa2023algorithm,zhang2025instspechh}. The selected heuristic is executed through \(R_p\), and the checked merge then accepts or rejects its repair.

\subsection{Alternating the Two Searches under Finite Budgets}

A fresh lower-level search for every upper candidate would be too costly. SpecAHD therefore alternates finite searches. At round \(t\), \(g^t\) exposes \(n_t=|\mathcal R_{g^t}|\) tasks. When \(n_t=0\), the previous lower-level search state is retained and \(g^t\) incurs the empty-region penalty.

For \(n_t>0\), every lower candidate retained in \(\mathcal P^{t+1}\) has a checked response on each of the \(n_t\) tasks, giving \(n_t|\mathcal P^{t+1}|\leq B_L\). After at most \(T_L\) generations,
\[
(\mathcal P^{t+1},\mathcal S^t)
=\operatorname{LowerSearch}(g^t,\mathcal P^t;T_L,B_L).
\]
We then update \(\mathcal A^{t+1}=\mathcal A^t\cup\mathcal S^t\) and select
\[
H^{t+1}=\operatorname{Greedy}_q(\mathcal P^{t+1};J_{g^t}).
\]
The upper-level search freezes \((\mathcal A^{t+1},H^{t+1})\) and evaluates every completed candidate on all \(N_{\mathrm{tr}}\) training instances, subject to at most \(T_U\) generations and total budget \(B_U\):
\[
g^{t+1}
=\operatorname{UpperSearch}(g^t,H^{t+1},\mathcal A^{t+1};T_U,B_U,K).
\]
The procedure stops at the round limit or shared budget. These budgeted updates approximate the nested coupled objective without assuming that either program search reaches its optimum.

\begin{table*}[!t]
\centering
\small
\begin{tabular}{@{}lrrrrrrrrrrr@{}}
\toprule
Method & \multicolumn{6}{c}{CVRP} & \multicolumn{5}{c}{TSP} \\
 & 100 & 200 & 300 & 500 & 800 & $t$(s) & 200 & 500 & 800 & 1000 & $t$(s) \\
\midrule
RS & 14.02k & 27.24k & 42.63k & 82.45k & 135.81k & 0.08 & 58.21 & 145.85 & 199.87 & 300.92 & 0.01 \\
HC & 15.27k & 28.35k & 44.22k & 86.45k & 136.80k & 0.18 & 82.69 & 211.16 & 292.06 & 439.91 & 0.01 \\
FS & 14.07k & 27.48k & 42.52k & 82.50k & 135.97k & 0.07 & 82.69 & 211.16 & 292.06 & 439.91 & 0.01 \\
EoH & 13.61k & 26.42k & 42.84k & 82.57k & 135.04k & 0.09 & 8.81 & 14.33 & 14.86 & 21.77 & 0.50 \\
ReEvo & 16.79k & 29.85k & 47.10k & 86.60k & 138.12k & 0.79 & 8.90 & 14.55 & 14.96 & 22.30 & 10.58 \\
MLES & 18.63k & 36.90k & 56.59k & 108.12k & 161.63k & 3.10 & 33.67 & 80.31 & 107.51 & 161.81 & 0.30 \\
PE & 19.59k & 35.38k & 59.07k & 102.32k & 172.91k & 0.48 & 9.18 & 14.95 & 15.29 & 22.68 & 0.11 \\
LHNS & 26.24k & 57.49k & 81.16k & 163.39k & 250.52k & 0.06 & 19.01 & 38.58 & 36.74 & 88.04 & 1.58 \\
MCTS & 13.87k & 26.83k & 42.83k & 81.84k & 135.02k & 0.12 & 8.96 & 14.77 & 15.33 & 22.54 & 7.26 \\
EoH-S & 57.98k & 44.71k & 62.46k & 75.72k & 126.85k & 0.39 & 10.90 & 14.36 & 14.69 & 21.54 & 11.12 \\
\midrule
SpecAHD-G & 12.96k & \textbf{23.40k} & \textbf{39.94k} & 75.54k & 126.25k & 0.06 & 8.61 & 13.42 & 14.52 & 21.49 & 0.39 \\
SpecAHD-D & \textbf{12.91k} & 23.59k & 40.12k & \textbf{49.98k} & \textbf{75.91k} & 0.06 & \textbf{8.12} & \textbf{12.06} & \textbf{13.99} & \textbf{20.66} & 0.47 \\
\bottomrule
\end{tabular}
\caption{Held-out CVRP and TSP validation by problem size. All methods use three independent runs; costs aggregate completed runs. Size columns give node or customer counts, $t$ mean end-to-end seconds per instance, and k thousands. Lower is better. RS/HC/FS/PE/MCTS denote RandSample/HillClimb/FunSearch/PartEvo/MCTS-AHD; G/D denote the GLM/DeepSeek SpecAHD variants.}
\label{tab:validation-all}
\end{table*}

\begin{table*}[!t]
\centering
\small
\begin{tabular}{@{}lrrrrrrrrrrrr@{}}
\toprule
Method & \multicolumn{6}{c}{VRPTW} & \multicolumn{6}{c}{SDVRP} \\
 & 200 & 400 & 600 & 800 & 1000 & $t$(s) & 200 & 250 & 300 & 400 & 500 & $t$(s) \\
\midrule
RS & 47.25k & 85.06k & 129.38k & 163.09k & 196.19k & 0.19 & 7.41k & 11.51k & 14.32k & 15.83k & 23.71k & 0.01 \\
HC & 84.17k & 162.44k & 258.70k & 348.41k & 432.48k & 0.10 & 7.34k & 11.44k & 14.26k & 15.79k & 23.66k & 0.02 \\
FS & 79.68k & 151.66k & 239.31k & 318.60k & 393.69k & 0.20 & 13.40k & 19.43k & 23.13k & 28.79k & 39.48k & 0.02 \\
EoH & 46.41k & 84.23k & 127.82k & 161.61k & 197.29k & 0.41 & 7.40k & 11.53k & 14.33k & 15.82k & 23.69k & 0.05 \\
ReEvo & 55.83k & 100.60k & 150.73k & 193.65k & 235.22k & 12.95 & 7.44k & 12.33k & 15.73k & 16.81k & 26.23k & 2.26 \\
MLES & 39.92k & 76.62k & 132.84k & 174.68k & 221.12k & 0.34 & 7.36k & 11.44k & 14.27k & 15.81k & 23.69k & 2.10 \\
PE & 47.42k & 83.25k & 124.93k & 158.40k & 189.05k & 0.45 & 7.41k & 11.52k & 14.34k & 15.84k & 23.70k & 8.05 \\
LHNS & 68.30k & 123.18k & 185.12k & 243.43k & 296.60k & 0.20 & 7.41k & 11.53k & 14.34k & 15.85k & 23.74k & 0.11 \\
MCTS & 41.64k & 73.71k & 107.49k & 136.51k & 165.05k & 0.63 & 7.34k & 11.44k & 14.26k & 15.78k & 23.65k & 0.15 \\
EoH-S & 48.30k & 66.09k & 94.74k & 119.30k & 143.78k & 1.43 & 12.08k & 13.80k & 11.36k & 21.87k & 15.37k & 2.55 \\
\midrule
SpecAHD-G & \textbf{33.45k} & 90.81k & \textbf{88.28k} & 181.23k & 226.09k & 0.21 & \textbf{3.31k} & 12.17k & 15.18k & 16.71k & 25.81k & 1.46 \\
SpecAHD-D & 46.48k & \textbf{61.03k} & 142.69k & \textbf{108.32k} & \textbf{121.49k} & 1.25 & 7.43k & \textbf{4.84k} & \textbf{5.92k} & \textbf{6.97k} & \textbf{9.96k} & 0.04 \\
\bottomrule
\end{tabular}
\caption{Held-out VRPTW and SDVRP validation by problem size. All methods use three independent runs; costs aggregate completed runs. Size columns give customer counts, $t$ mean end-to-end seconds per instance, and k thousands. Lower is better.}
\label{tab:validation-vrptw-sdvrp}
\end{table*}

\begin{table}[tb]
\centering
\small
\begin{tabular}{@{}lrrrr@{}}
\toprule
& \multicolumn{2}{c}{Lower-level search} & \multicolumn{2}{c}{Frozen repertoire} \\
\cmidrule(lr){2-3}\cmidrule(l){4-5}
Variant & TSP $\uparrow$ & CVRP $\downarrow$ & TSP $\uparrow$ & CVRP $\uparrow$ \\
& (\%) & (k) & (\%) & (\%) \\
\midrule
Full & \textbf{$5.658$} & \textbf{$59.943$} & \textbf{$-0.337$} & \textbf{$0.578$} \\
Single embedding & $-0.015$ & $83.512$ & $-1.857$ & $-0.350$ \\
w/o semantic filter & $3.783$ & $60.712$ & $-0.986$ & $-1.304$ \\
w/o skill guidance & $4.227$ & $118.517$ & $-1.141$ & $-0.079$ \\
\bottomrule
\end{tabular}
\caption{Held-out ablations of the lower-level search and the full system with a frozen repertoire. Each setting uses three independent runs; cells average completed runs. Appendix~F gives run-level values, sample standard deviations, and completion status.}
\label{tab:specahd-validation-lower-ablation}
\label{tab:specahd-validation-upper-ablation}
\end{table}

\section{Experiments}

We evaluate SpecAHD on four routing problems, compare it with ten program-search baselines, and test both component contributions and region-level specialization.

\subsection{Experimental Setup}

\paragraph{Tasks and data splits.}
Following EoH-S, the lower-level searches for TSP and CVRP retain 128 clustered Euclidean instances (10--200 cities) and 256 uniform instances (20--200 nodes; capacities 10--150), respectively~\cite{liu2026eohs}. Lower-level training data for VRPTW and SDVRP follow the generation rules of their respective 60-instance large-instance upper-level training pools. Upper-level training uses independently generated large instances. The synthetic CVRP pool contains 60 instances and follows the uniform-instance rules used for lower-level training. The VRPTW pool follows Solomon/Homberger C/R/RC spatial families and type-1/type-2 time-window regimes at 200, 400, 600, 800, and 1,000 customers~\cite{solomon1987vrptw,homberger1999two}. The SDVRP pool samples 200--500 customers from uniform, clustered, ring, and TSPLIB-like layouts under medium, heavy, and mixed demand; PASA-style splitting and PyVRP construct its baseline solutions~\cite{torkzaban2023pasa,wouda2024pyvrp}. A separately seeded 60-instance set supports checkpoint selection. Final tests use CVRP-XL, TSPLIB above 1,000 nodes, Solomon/Homberger VRPTW, and SET-1--4 SDVRP~\cite{queiroga2026xl,reinelt1991tsplib,chen2007split,belenguer2000lower,archetti2008optimization}; Appendix~D lists every result.

\paragraph{Baselines.}
The ten baselines span memory-free sampling, $(1+1)$ search, populations, partitioned populations, neighborhood search, and tree search: RandSample, HillClimb~\cite{zhang2024importance}, FunSearch~\cite{romera2024mathematical}, EoH~\cite{liu2024evolution}, ReEvo~\cite{ye2024reevo}, MLES~\cite{hu2026mles}, PartEvo~\cite{hu2025partevo}, LHNS~\cite{xie2025lhns}, MCTS-AHD~\cite{Zheng2025}, and EoH-S~\cite{liu2026eohs}. All use the same LLM4AD task and sandbox interfaces~\cite{liu2024llm4ad}. This controlled interface addresses comparison and feasibility risks emphasized by recent AHD benchmarks~\cite{zhang2024importance,sun2026cobench}; Appendix~C records the routing-specific adapters.

\paragraph{LLM backbones, search budget, and reporting.}
We use GLM-4-Flash and DeepSeek-V4-Flash and report both SpecAHD variants. Within each backbone, all methods share the task, template, training data, and a 1,000-program budget. For SpecAHD, this budget covers both the upper-level and lower-level searches. We run each method independently three times. Each run uses four sampler workers, four evaluator workers, and a population of ten. We average completed runs while retaining partial runs, parser failures, infeasible outputs, and timeouts in the record. Appendices~C and F give the full protocol and run-level ablation results.

Across the 19 validation size groups, one of the two SpecAHD backbones obtains the lowest cost in every group (Table~\ref{tab:validation-all}). We report both because the stronger backbone varies by problem and size.

Held-out validation measures size scaling under independently seeded synthetic distributions, whereas the named public sets test transfer to established benchmarks. The public comparison uses the lowest numeric baseline on each row; Table~\ref{tab:public-summary} summarizes the outcomes, and Appendix~D gives the underlying costs.

\begin{table}[tb]
\centering
\small
\begin{tabular}{@{}lrr@{}}
\toprule
Problem & Reported & Win \\
\midrule
CVRP & 79 & 71 \\
TSP & 30 & 28 \\
VRPTW & 331 & 315 \\
SDVRP & 95 & 68 \\
\bottomrule
\end{tabular}
\caption{Public benchmark summary against the per-instance baseline envelope. Appendix~D reports every objective value.}
\label{tab:public-summary}
\end{table}

\subsection{Ablation Summary}

We ablate the three-embedding memory, semantic duplicate filter, and skill guidance on 60 held-out TSP and CVRP instances, using three independent runs per setting. The ablation of the lower-level search evolves specialists; the full-system study changes the upper-level search configuration with a frozen lower repertoire. Full SpecAHD has the best mean in all four settings (Table~\ref{tab:specahd-validation-lower-ablation}). One single-embedding lower-CVRP run and one no-filter full-system CVRP run time out. Appendix~F reports completion counts and runtimes; per-run values are available in the released artifacts.

\subsection{Repair Specialization Validation}

We replay all ten retained programs on 142 held-out TSP and 430 held-out CVRP regions. We fit the $k=5$ router only on training regions and compare it with the global training-best heuristic, uniform random selection over 30 seeded repeats, and a hindsight oracle.

\begin{table}[tb]
\centering
\small
\begin{tabular}{@{}llrr@{}}
\toprule
Problem & Policy & Response $\uparrow$ & $\Delta$GB (\%) $\uparrow$ \\
\midrule
TSP & SpecAHD & 0.8695 & $+0.659$ \\
    & Global best & 0.8200 & 0.000 \\
    & Random & $0.7551{\pm}0.0248$ & $-1.371{\pm}0.502$ \\
    & Oracle & 1.0000 & $+2.552$ \\
\midrule
CVRP & SpecAHD & 0.9493 & $+0.188$ \\
     & Global best & 0.9489 & 0.000 \\
     & Random & $0.8038{\pm}0.0114$ & $-12.027{\pm}1.989$ \\
     & Oracle & 1.0000 & $+1.998$ \\
\bottomrule
\end{tabular}
\caption{Routing on held-out regions. Uniform random selection reports mean and standard deviation over 30 seeded repeats; the other policies are deterministic. Response is normalized within each region, and $\Delta$GB is signed improvement over the global training-best heuristic.}
\label{tab:response-routing}
\end{table}

The router improves TSP cost over the global best by $0.659\%$. On CVRP, its normalized response is higher ($0.9493$ vs.\ $0.9489$), and its signed cost improvement is $+0.188\%$. The gain is smaller than on TSP but remains positive. The oracle gains of $2.552\%$ and $1.998\%$ show that further improvements are attainable on both tasks. Appendix~E reports regret, oracle-hit rate, repertoire growth, and representative programs.

\section{Conclusion and Limitations}

In this work, we introduced SpecAHD, a coupled bilevel framework that brings specialization inside large-scale routing instances. SpecAHD learns to expose bounded repair regions and evolves a complementary repertoire whose members are retained through checked task-level responses. For a fixed induced task set and candidate pool, we proved that the repertoire objective is monotone submodular, which gives greedy selection a \((1-1/e)\) approximation guarantee. The same responses populate a Repair Response Archive for region-conditioned routing, while checked sequential repair preserves complete-solution feasibility. Across four routing problems, multiple LLM backbones, and large-scale benchmarks, SpecAHD reduces objective cost by up to \(57.7\%\) against the strongest competing AHD baseline and generalizes beyond its training distributions.

SpecAHD shifts LLM-driven AHD from selecting one broadly effective program to choosing among specialists according to local solution structure. The theoretical guarantee does not establish convergence of the alternating search, and archive routing can weaken when its neighborhoods are sparse or structurally shifted. Future work should reduce the dependence on hand-designed descriptors and make routing more reliable under distribution shift. It should also account for interactions among successive repairs and lower generated-code execution costs.

\bibliography{references}

\end{document}


\maketitle

\section{A. Additional Proofs}

\subsection{Feasibility under Checked Merge}

We restate the feasibility claim using the notation of the main paper. Let
\(y^{(0)}=y_0\) be feasible. At repair step \(j\), the adapter produces a
candidate merge \(\bar y_p(r_j,h_{i_j})\), and the checked operator returns
\[
\Gamma_p(r_j,h_{i_j})=
\begin{cases}
\bar y_p(r_j,h_{i_j}),
&F_p(x,\bar y_p(r_j,h_{i_j}))=1,\\
y^{(j-1)},&\text{otherwise}.
\end{cases}
\]

\begin{proposition}
If \(F_p\) exactly recognizes the feasible set \(\mathcal Y_p(x)\), then
checked sequential repair returns a feasible complete solution for any
validated region sequence and any collection of lower heuristics.
\end{proposition}

\begin{proof}
We use induction over the repair sequence. The base case holds because
\(y^{(0)}=y_0\in\mathcal Y_p(x)\). Assume
\(y^{(j-1)}\in\mathcal Y_p(x)\). If the candidate merge passes the checker,
exactness of \(F_p\) gives
\(\bar y_p(r_j,h_{i_j})\in\mathcal Y_p(x)\), and this candidate becomes
\(y^{(j)}\). If it fails, the operator returns \(y^{(j-1)}\), which is
feasible by the induction hypothesis. Hence \(y^{(j)}\) is feasible in
both cases. Applying the argument through step \(L\) gives
\(\hat y=y^{(L)}\in\mathcal Y_p(x)\).
\end{proof}

The argument does not require the merge order to be exchangeable. Different
orders may lead to different feasible solutions because an accepted repair
changes the state seen by later repairs.

\subsection{Greedy Repertoire Selection}

Fix an upper program \(g\), its finite repair-task set \(\mathcal R_g\), and
a finite lower candidate pool \(\mathcal P_g\). For
\(H\subseteq\mathcal P_g\), define
\[
J_g(H)=
\frac{\beta}{q}\sum_{h\in H}\bar s_g(h)
+\frac{1-\beta}{n_g}\sum_{r\in\mathcal R_g}
\max_{h\in H}s_{h,r},
\]
where \(n_g=|\mathcal R_g|>0\), all scores are nonnegative, and the maximum
over the empty set is zero.

\begin{proposition}
The function \(J_g\) is normalized, nonnegative, monotone, and submodular.
Under \(|H|\leq q\), greedy selection attains a \((1-1/e)\) approximation.
Maximizing \(J_g\) is NP-hard even when \(\beta=0\) and all repair scores
are binary.
\end{proposition}

\begin{proof}
The first term is a nonnegative modular function. It is therefore normalized,
monotone, and submodular. Consider one task \(r\) in the second term and set
\(f_r(H)=\max_{h\in H}s_{h,r}\). Its marginal gain for a candidate
\(u\notin H\) is
\[
\Delta_r(u\mid H)=
\left[s_{u,r}-\max_{h\in H}s_{h,r}\right]_+.
\]
For \(A\subseteq B\), we have
\(\max_{h\in A}s_{h,r}\leq\max_{h\in B}s_{h,r}\), and thus
\(\Delta_r(u\mid A)\geq\Delta_r(u\mid B)\). This is the diminishing-returns
condition. Each \(f_r\) is also normalized, nonnegative, and monotone.
Nonnegative weighted sums preserve these properties, so \(J_g\) has all four.
The standard result for cardinality-constrained maximization of a normalized
monotone submodular function gives the \((1-1/e)\) greedy bound
\cite{nemhauser1978analysis}.

For hardness, take an instance of Maximum Coverage with universe
\(\mathcal U\), subsets \(S_h\subseteq\mathcal U\), and a budget \(q\).
Create one repair task for every element \(r\in\mathcal U\), one candidate
heuristic for every subset, set \(\beta=0\), and define
\(s_{h,r}=1\) exactly when \(r\in S_h\). Then
\(n_gJ_g(H)\) is the number of covered elements. An optimizer for the
repertoire problem would therefore solve Maximum Coverage~\cite{Garey1979}.
\end{proof}

The guarantee concerns selection from an already evaluated candidate pool.
It does not cover program generation, archive routing, alternation between the
two searches, or the quality of the exposed repair tasks.

\section{B. Problem-Specific Repair Interfaces}

SpecAHD shares its search, dense response table, repertoire selection, and routing
rule across problems. The exposed object and the feasibility boundary remain
problem specific. Table~\ref{tab:supp-repair-interfaces} states the interface
used for each task. These choices follow the usual distinction in LNS between
selecting a neighborhood and reconstructing the released decisions
\cite{shaw1998using,ropke2006adaptive}. Related large-routing methods also
delegate spatially local subproblems to a separate solver or learn the repair
operator within a fixed neighborhood interface
\cite{li2021delegate,hottung2020neural}.

\begin{table*}[tb]
\centering
\small
\begin{tabular}{@{}p{0.10\textwidth}p{0.20\textwidth}p{0.25\textwidth}p{0.35\textwidth}@{}}
\toprule
Problem & Region returned by the upper program & Lower construction state & Validation and complete-solution check \\
\midrule
TSP & A route-order segment with fixed endpoints & Current node, destination endpoint, unvisited interior nodes, and the distance matrix & Keep segments within the size and count budgets; remove repeated or overlapping tour positions; verify that the merged tour contains every node once and returns to its start. \\
CVRP & A set of mutable customers, excluding the depot & Current node, remaining vehicle capacity, feasible unvisited customers, demands, and distances & Remove depot indices, duplicates, and oversized groups; after reconstruction, check customer coverage, depot boundaries, capacity, and total route cost. \\
VRPTW & A customer set whose assignments may be rebuilt & CVRP state together with current time, ready times, due times, and service times & Apply the customer-group checks, then verify capacity and every arrival, waiting, service, and due-time constraint in the reconstructed complete solution~\cite{solomon1987vrptw,homberger1999two}. \\
SDVRP & A customer-delivery support that may span several routes & Residual demands, remaining capacity, current route state, and the customers still requiring delivery & Preserve delivered quantity rather than visit uniqueness; verify nonnegative deliveries, route capacity, demand conservation, depot boundaries, and complete service~\cite{chen2007split,belenguer2000lower}. \\
\bottomrule
\end{tabular}
\caption{Problem-specific repair interfaces. The upper program identifies mutable support; the lower program chooses the next constructive action inside that support.}
\label{tab:supp-repair-interfaces}
\end{table*}

\paragraph{Region validation.}
The upper output is treated as an untrusted proposal. Each adapter converts
indices to integers, removes out-of-range entries, enforces the maximum number
and size of regions, and removes repeated mutable decisions. For TSP, retained
nodes must occupy a contiguous interval in the current route. For the three
vehicle-routing tasks, the depot is never included in a customer group. A raw
proposal that becomes empty after these checks receives the empty-region
penalty and invokes no lower program.

\paragraph{Bounded constructive calls.}
The framework presents only feasible next actions to a lower heuristic. If a
returned action is unavailable or violates the local state, a deterministic
fallback selects from the same feasible set. TSP construction keeps the two
segment endpoints fixed. CVRP and SDVRP start a new depot route when no
remaining customer fits the residual capacity. VRPTW filters next customers by
capacity and reachable due time before calling the generated program. Parsing,
execution, and timeout failures are recorded as rejected task outcomes.

\paragraph{Complete-solution boundary.}
Local filtering does not replace the checker. Capacity, time, and delivery
constraints can depend on decisions outside one repair call, so the merged
complete solution is checked before it replaces the incumbent. The feasibility
claim in Section~A applies to this final accept-or-rollback boundary, not to the
generated program in isolation.

\section{C. Reproducibility Protocol}

\paragraph{Data separation.}
The lower-level search, upper-level search, model selection, and final
testing use disjoint files. The TSP lower-level search uses four 32-instance synthetic
sets, giving 128 training instances. The CVRP lower-level search uses 256 synthetic
instances. VRPTW and SDVRP lower-search instances use the same generation
rules as their respective large-instance upper-level training pools. The upper-level searches for TSP and CVRP
each use 60 independently generated synthetic instances; the CVRP pool
follows the same uniform-instance rules as its lower-search data. Their held-out
ablation sets contain another 60 instances generated from different seeds and
spatial configurations. Public instances are reserved for the final test. They
are not used to generate skills, normalize structural descriptors, populate
repair responses, or choose a checkpoint.

\begin{table}[tb]
\centering
\small
\begin{tabular}{@{}p{0.16\columnwidth}p{0.54\columnwidth}p{0.16\columnwidth}@{}}
\toprule
Problem & Held-out/public test source & Instances \\
\midrule
TSP & TSPLIB, 1,001--20,000 nodes~\cite{reinelt1991tsplib} & 31 \\
CVRP & CVRP-XL, 1,000--10,000 customers~\cite{queiroga2026xl} & 100 \\
VRPTW & Solomon and Homberger~\cite{solomon1987vrptw,homberger1999two} & 356 \\
SDVRP & SET-1 through SET-4~\cite{chen2007split,archetti2008optimization} & 95 \\
\bottomrule
\end{tabular}
\caption{Public test sets reserved for final reporting.}
\label{tab:supp-public-tests}
\end{table}

\paragraph{Models and search budget.}
We run GLM-4-Flash and DeepSeek-V4-Flash separately. Within one backbone, every
LLM-based AHD method receives the same task description, template program,
training data, and a total allowance of 1,000 sampled programs (upper-level and lower-level searches combined for SpecAHD). We run each method independently three times. Each population-based run keeps ten programs and uses four sampler workers and four evaluator
workers. The complete embedding configuration uses
\texttt{multilingual-e5-large-instruct},
\texttt{jina-embeddings-v5-text-small}, and
\texttt{Qwen3-Embedding-0.6B}. The single-embedding ablation keeps only Jina.
Embedding models organize generation-time memory and are not loaded when a
frozen heuristic is evaluated.

\paragraph{Execution isolation.}
Generated upper programs pass a source guard that rejects imports, recursion,
system calls, and oversized source before execution. On Linux, each upper
candidate runs in a separate process with a 120-second limit. Generated lower
programs use the same per-candidate timeout configured by the task evaluator.
Malformed output, an empty validated region set, and a candidate-level timeout
remain visible in the run record. All generated programs are evaluated through
the same LLM4AD task and sandbox interface~\cite{liu2024llm4ad}.

\paragraph{Run selection and aggregation.}
A run is eligible once it reaches the prespecified sample budget. If a run
contains several restarted directories, the newest complete directory is used;
if none is complete, the directory with the largest recorded sample index is
retained and marked partial. This rule is applied before inspecting its final
score. Quality is summarized over complete runs, while completion counts and
wall-clock measurements retain partial runs and timeouts. The ablation plots
show all completed run-level points because three runs are insufficient for
a reliable significance test.

\paragraph{Evaluation unit and failure accounting.}
The lower-level and upper-level searches use different executable units. One lower-level
evaluation is a candidate--region response, whereas one upper-level evaluation runs
an upper program through the complete repair pipeline on one training
instance. A lower candidate enters repertoire selection only after evaluation
on the full task set exposed in that round. An upper candidate enters survivor
ranking only after all prescribed training instances finish. Parser errors,
timeouts, invalid local actions, rejected merges, and incomplete runs are not
imputed. They remain failures in the corresponding completion count.

\paragraph{Frozen evaluation state.}
Checkpoint selection precedes public testing. The chosen upper program, lower
repertoire, descriptor normalization statistics, and Repair Response Archive
are frozen together. Public instances are evaluated without archive write-back
and cannot change a later routing decision. The response router therefore uses
only records collected during training, including its fallback to a heuristic's
mean archived score when no observed response is available among the selected
neighbors.

\paragraph{Reported-cost convention.}
All routing objectives are minimized. Tables with values in thousands append
``k'' in the main text or state the scaling in the caption. A ``T'' in a
per-instance table means that no valid result was available for that method on
that instance. Such entries are displayed rather than replaced. The compact
public summary in the main paper compares SpecAHD with the lowest available
baseline cost on each reported instance. Its signed improvement is computed
before averaging, so a positive value always favors SpecAHD even though the
identity of the strongest baseline may vary across instances.

\paragraph{Scope of the statistical claims.}
The three search runs measure variability in stochastic program generation,
but they do not support a high-power significance test. We therefore report
run-level points, means, standard deviations, and completion status without a
claim of statistical significance. The public benchmark tables are descriptive
comparisons of the frozen checkpoint on the reported instances. They do not
enter checkpoint selection or any search update.

\paragraph{Task construction and replay order.}
The upper program is always evaluated from the baseline solution produced for
the current instance. Its raw region sequence first passes through the
problem-specific validator. Empty, repeated, overlapping, oversized, or
out-of-range supports are removed before a lower program is called. Retained
regions are processed in the validated order. After each accepted repair, the
descriptor for the next region is recomputed from the new feasible incumbent.
This detail matters because an early repair may change capacity slack,
endpoint distances, waiting time, or residual demand observed by a later
region. Replaying the same generated programs therefore requires the saved
validated order as well as the program source.

\paragraph{Archive observations and fallback.}
The archive stores checked candidate--task responses rather than one aggregate
score per program. An archive row contains the task identity, its normalized
descriptor, the lower-program identity, and the measured nonnegative repair
score. Not every retained program has been evaluated on every task exposed in
a later round. During lookup, the router averages only observed pairs among the
selected neighboring tasks. If none of those neighbors contains an observation
for a repertoire member, the router uses that member's mean archived score.
Ties are resolved by this mean and then by the fixed repertoire index. These
rules make a frozen evaluation deterministic once the baseline solution,
programs, archive, and validated region sequence have been fixed.

\paragraph{Run-level provenance.}
The stored audit trail links each reported search score to the generated source,
its design text, the variation operator, parent identifiers, selected skill
records, and the survivor population. Evaluator logs retain the instance or
repair-task key together with parser, timeout, validation, and checker status.
The analysis scripts choose a run directory by the prespecified completion rule
before reading its terminal objective. Consequently, reconstructing a table
does not require another language-model call and does not depend on manually
choosing among restarted directories after inspecting their scores.

\paragraph{What the timing columns include.}
Validation runtimes measure the complete frozen repair pipeline on one instance:
upper-program execution, validation, descriptor computation, archive lookup,
lower-program calls, merge, and complete-solution checks. They exclude search
training and embedding-model inference because embeddings are not loaded during
frozen validation. The full per-instance runtime fields remain in the evaluation
records; the appendix prints objective-only named-instance tables at 9-point
type so that all method columns remain legible without geometric scaling.

\subsection{Computational Environment and Artifacts}

\paragraph{Hardware.}
All experiments run on a single Ubuntu 22.04 server with one NVIDIA GPU (CUDA~12.x driver). The embedding server loads models onto \texttt{cuda:0} and listens at \texttt{http://127.0.0.1:8765}. Training and validation share the same physical machine; the embedding server starts once before the ablation suite and remains live for all runs. Frozen validation does not load embedding models and therefore does not require a GPU.

\paragraph{Software stack.}
The codebase targets Python~3.9--3.12 and is developed inside a Conda environment (\texttt{ahd}). The core numerical and machine-learning dependencies are \texttt{numpy}$\,<$\texttt{2.0.0}, \texttt{torch} (PyTorch), \texttt{scipy}, and \texttt{numba}. The routing baseline uses \texttt{pyvrp==0.13.4}. LLM calls go through \texttt{openai} (OpenAI-compatible client). Source-level code metrics rely on \texttt{tree-sitter-python==0.21} and \texttt{codebleu}. Logging and visualization use \texttt{tensorboard}, \texttt{wandb}, \texttt{matplotlib}, \texttt{pandas}, and \texttt{ttkbootstrap}. The full dependency list is frozen in \texttt{code/requirements.txt}.

\paragraph{LLM backbones and API.}
Two LLM backbones are evaluated independently: \texttt{glm-4-flash} (Zhipu, via \texttt{https://open.bigmodel.cn/api/paas/v4/}) and \texttt{deepseek-v4-flash} (DeepSeek, via \texttt{https://api.deepseek.com}). The API key is read from \texttt{api\_key.txt} (Zhipu) or \texttt{deepseek\_api.txt} (DeepSeek) and selected by the \texttt{SPECAHD\_API\_KEY\_FILE} environment variable. Every LLM call uses the OpenAI-compatible chat-completions interface with the default sampling parameters of each provider. The LLM client timeout is 120~seconds per request.

\paragraph{Embedding models.}
The generation-time memory uses three sentence-embedding models loaded through a local HuggingFace-compatible server: \texttt{intfloat/multilingual-e5-large-instruct}, \texttt{jinaai/jina-embeddings-v5-text-small}, and \texttt{Qwen/Qwen3-Embedding-0.6B}. Model weights are cached under \texttt{embedding\_model/} and served with \texttt{HF\_HUB\_OFFLINE=1} and \texttt{TRANSFORMERS\_OFFLINE=1}. The single-embedding ablation retains only the Jina model.

\paragraph{Fixed random seeds and data generation.}
All synthetic training and validation instances are generated with fixed seeds to ensure reproducibility across backbone and variant comparisons. TSP and CVRP dataset generation uses \texttt{np.random.seed(2024)} and \texttt{torch.manual\_seed(2024)}. Each synthetic instance identifier embeds its generation seed (e.g., \texttt{s\{seed\}} in the instance name). The independent held-out sets use disjoint seed ranges and distinct spatial distributions (e.g., clustered vs.\ uniform). Public TSPLIB, CVRP-XL, Solomon/Homberger, and SDVRP instances are never regenerated and are reserved for final testing only.

\paragraph{Search budget and concurrency.}
Every configuration uses the same fixed budget: 1,000 sampled programs, 1,000 generations, population size~10, 4~sampler workers, and 4~evaluator workers. We run every configuration independently three times with different random seeds for the LLM-based search. The upper candidate timeout is 120~seconds per instance on Linux; lower candidates share the same per-call timeout. On Windows, evaluation runs in-process with the same source-level guard.

\paragraph{Artifact availability.}
The frozen checkpoints (upper program, lower repertoire, descriptor statistics, and Repair Response Archive), the canonical-run audit trail, the label-correction mapping, and the validation scripts are released with the paper. The artifact repository includes \texttt{ablation\_scripts/}, \texttt{ablation\_results\_aaai/}, \texttt{best\_samples\_analysis/}, and \texttt{appendix\_e\_response\_routing/}. All run-level tables and figures are reproducible from the provided frozen artifacts and evaluation scripts without additional LLM calls.

\onecolumn
\section{D. Extended Benchmark Results}

\noindent\textbf{Abbreviations.}
The table headers use RS for RandSample, HC for HillClimb, FS for
FunSearch, PE for PartEvo, and MCTS for MCTS-AHD; EoH, EoH-S, ReEvo,
MLES, and LHNS retain their published method names~\cite{zhang2024importance,
romera2024mathematical,liu2024evolution,ye2024reevo,hu2026mles,
hu2025partevo,xie2025lhns,Zheng2025}. SpecAHD denotes the proposed method, and T marks
a timeout or unavailable result. Bold entries are the lowest costs for
each instance, including ties.

\subsection{Public Results by Named Instance}

Tables~\ref{tab:cvrp-named-test-results}--\ref{tab:sdvrp-named-test-results} report every completed named-instance cost used by the main summary: 79 CVRP, 30 TSP, 331 VRPTW, and 95 SDVRP rows. For each row, the baseline envelope is the lowest available numeric baseline; a method-level timeout remains marked ``T'' and is excluded only from that row's minimum. All printed rows contain a valid SpecAHD result.

\input{Word/results/benchmark_cvrp_named_test}
\input{Word/results/benchmark_tsp_named_test}
\input{Word/results/benchmark_vrptw_named_test}
\input{Word/results/benchmark_sdvrp_named_test}

\twocolumn

\section{E. Repair Specialization and Routing}

We evaluate whether the ten retained lower programs provide complementary repair
behavior and whether the frozen response router can exploit it. For each problem,
the selected upper program exposes regions on 60 training instances and a
disjoint set of 60 validation instances. Replaying every lower program yields
147 training and 142 held-out TSP regions, together with 431 training and 430
held-out CVRP regions. The two sets share no region keys. We fit the $k=5$
response router on training regions only and compare it with the global
training-best heuristic, uniform random selection, and a per-region hindsight
oracle. Uniform random selection uses 30 seeded repeats.

The main text reports the routing results; we explain here how the four metrics are computed from the underlying repair records.

\paragraph{Normalized response.}
For each region $r$, let $\mathcal H=\{h_1,\dots,h_{10}\}$ be the lower repertoire and let $c_{h,r}$ denote the complete-solution cost when heuristic $h$ repairs region $r$. The per-region response is linearly rescaled within $[0,1]$:
\[
\bar s_{h,r}=
\frac{\max_{h'\in\mathcal H}c_{h',r}-c_{h,r}}
{\max_{h'\in\mathcal H}c_{h',r}-\min_{h'\in\mathcal H}c_{h',r}}.
\]
A policy that selects heuristic $\pi(r)$ for region $r$ obtains the mean normalized response $\frac{1}{|\mathcal R|}\sum_{r\in\mathcal R}\bar s_{\pi(r),r}$, where $\mathcal R$ is the held-out region pool.

\paragraph{Regret.}
Let $c^{\star}_r=\min_{h\in\mathcal H}c_{h,r}$ be the per-region oracle cost. The oracle regret of a policy is the mean relative gap to this hindsight best:
\[
\text{Regret}=\frac{100}{|\mathcal R|}\sum_{r\in\mathcal R}
\frac{c_{\pi(r),r}-c^{\star}_r}{c^{\star}_r}.
\]

\paragraph{$\Delta$GB.}
Let $\bar h$ denote the single heuristic with the best average training score. The signed improvement of a policy over this global best is
\[
\Delta\text{GB}=\frac{100}{|\mathcal R|}\sum_{r\in\mathcal R}
\frac{c_{\bar h,r}-c_{\pi(r),r}}{c_{\bar h,r}}.
\]

\paragraph{Oracle hit rate.}
A policy ties the oracle on region $r$ exactly when $c_{\pi(r),r}=c^{\star}_r$. The oracle hit rate is the fraction of regions where this occurs.

\begin{table*}[tb]
\centering
\small
\begin{tabular}{@{}llrrrr@{}}
\toprule
Problem & Policy & Response $\uparrow$ & $\Delta$GB (\%) $\uparrow$ & Regret (\%) $\downarrow$ & Hit (\%) $\uparrow$ \\
\midrule
TSP & SpecAHD & 0.8695 & $+0.659$ & 2.032 & 54.93 \\
    & Global best & 0.8200 & 0.000 & 2.846 & 51.41 \\
    & Random & $0.7551{\pm}0.0248$ & $-1.371{\pm}0.502$ & $4.016{\pm}0.515$ & $47.98{\pm}2.96$ \\
    & Oracle & 1.0000 & $+2.552$ & 0.000 & 100.00 \\
\midrule
CVRP & SpecAHD & 0.9493 & $+0.188$ & 2.271 & 43.49 \\
     & Global best & 0.9489 & 0.000 & 2.139 & 45.58 \\
     & Random & $0.8038{\pm}0.0114$ & $-12.027{\pm}1.989$ & $14.465{\pm}2.038$ & $26.64{\pm}1.41$ \\
     & Oracle & 1.0000 & $+1.998$ & 0.000 & 100.00 \\
\bottomrule
\end{tabular}
\caption{Complete routing-policy comparison on held-out repair regions. The oracle is available only for retrospective analysis.}
\label{tab:supp-response-routing}
\end{table*}

\begin{table*}[tb]
\centering
\small
\begin{tabular}{@{}lrrrr@{}}
\toprule
Problem & Size & Best response $\uparrow$ & Regret (\%) $\downarrow$ & Coverage (\%) $\uparrow$ \\
\midrule
TSP  & 1  & 0.8695 & 2.032 & 54.93 \\
     & 2  & 0.9178 & 1.180 & 61.27 \\
     & 4  & 0.9666 & 0.502 & 76.76 \\
     & 6  & 0.9768 & 0.317 & 80.99 \\
     & 10 & 1.0000 & 0.000 & 100.00 \\
\midrule
CVRP & 1  & 0.9492 & 2.735 & 39.53 \\
     & 2  & 0.9824 & 0.842 & 69.07 \\
     & 4  & 0.9931 & 0.437 & 82.56 \\
     & 6  & 0.9989 & 0.066 & 96.74 \\
     & 10 & 1.0000 & 0.000 & 100.00 \\
\bottomrule
\end{tabular}
\caption{Held-out complementarity as the training-selected repertoire grows. Covered is the share of regions for which the subset contains at least one oracle-tied member.}
\label{tab:supp-repertoire-size}
\end{table*}

The benefit appears as soon as a second member enters the repertoire:
Table~\ref{tab:supp-repertoire-size} shows that oracle regret falls from $2.032\%$ to
$1.180\%$ on TSP and from $2.735\%$ to $0.842\%$ on CVRP. The gains continue as
additional members enter, although with diminishing returns. This behavior is
consistent with a repertoire whose programs cover different repair regions,
rather than a collection of interchangeable copies.

\section{F. Ablation Studies}

The two ablations isolate different parts of the system. The lower study evolves
and evaluates the repair repertoire under each configuration. The full-system
study freezes one lower repertoire for each problem and changes only the
mechanisms used to evolve the upper program. Every upper candidate is
still evaluated through region validation, response-based routing, local
construction, and complete-solution scoring. The latter is therefore an
end-to-end ablation of the upper-level search under controlled repair capacity, rather
than an upper program evaluated by itself.

\input{Word/ablation/paper/validation_ablation_combined}

\begin{table}[tb]
\centering
\small
\begin{tabular}{@{}lp{0.58\columnwidth}@{}}
\toprule
Variant & Change from the complete configuration \\
\midrule
Full SpecAHD & Three embeddings, semantic duplicate filtering, and skill-guided generation \\
Single embedding & Retain only the Jina embedding; keep filtering and skill guidance \\
w/o semantic filter & Keep all embeddings and skill guidance; admit semantic duplicates \\
w/o skill guidance & Keep all embeddings and filtering; remove skill records from generation prompts \\
\bottomrule
\end{tabular}
\caption{Controlled variants used in the ablation of the lower-level search and in the full-system ablation with a frozen repertoire.}
\label{tab:supp-ablation-variants}
\end{table}

\begin{figure}[tb]
\centering
\includegraphics[width=0.48\linewidth]{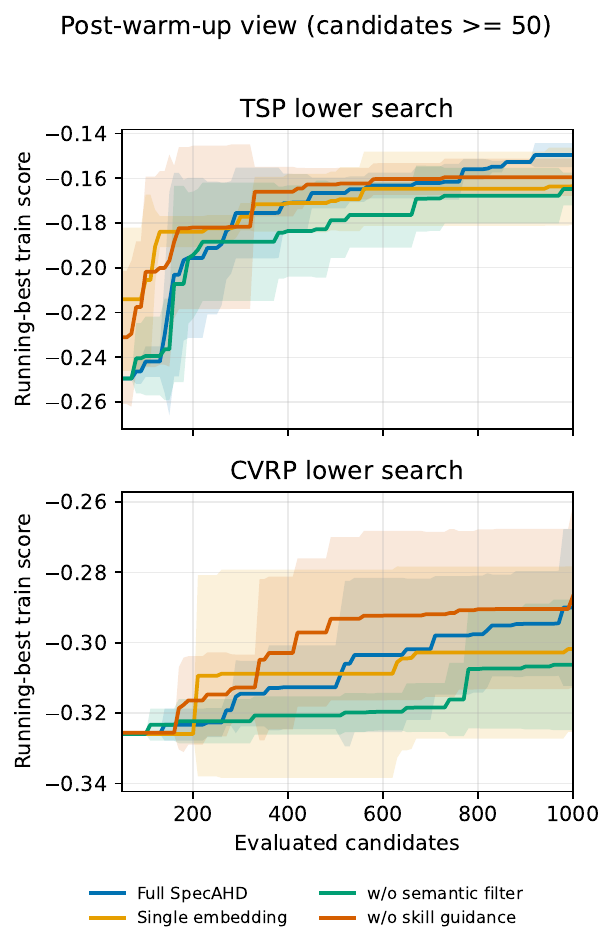}\hfill
\includegraphics[width=0.48\linewidth]{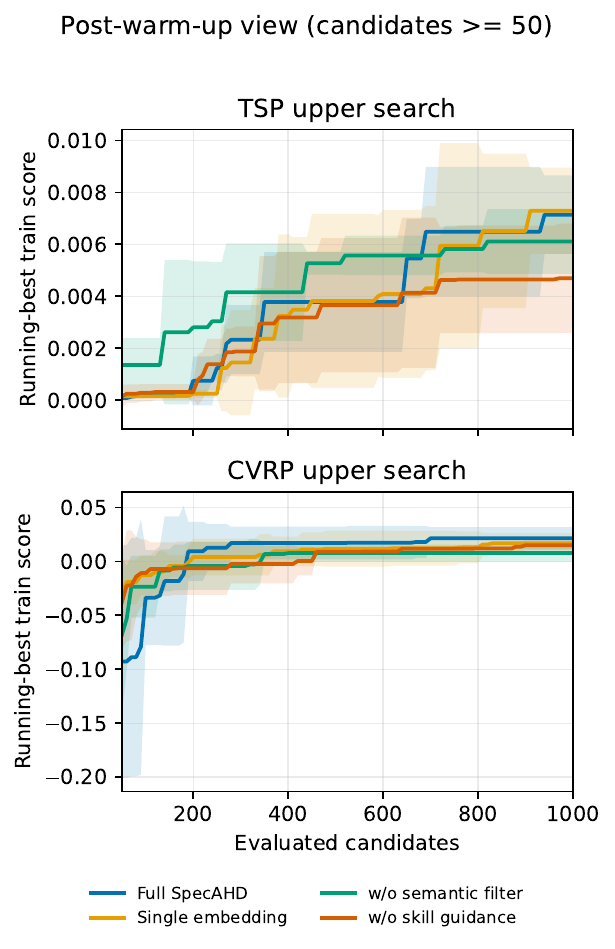}
\caption{Running-best training scores after the first 50 evaluated candidates. The lower-level search is shown on the left and the upper-level search on the right. Lines show the mean over three runs and shaded regions show one standard deviation. These curves describe optimization dynamics; held-out repair quality is reported separately.}
\label{fig:specahd-convergence}
\end{figure}

Figure~\ref{fig:specahd-convergence} exposes the lower-level and upper-level search trajectories behind the aggregate ablation results. These are training diagnostics rather than estimates of held-out solution quality.

The complete per-run means are reported in the ablation table of the main text. With three runs per setting, we do not perform a significance test. Per-instance evaluation records, completion counts, and runtimes are available in the released artifacts.

Execution robustness is summarized separately for the lower-level search and the full
system in Figure~\ref{fig:specahd-validation-runtime}. Removing semantic filtering
produces the longest complete TSP runs and the full-system CVRP timeout. The
single-embedding condition has one timeout in the CVRP lower-level search and one partial
full-system TSP run.

\begin{figure}[tb]
\centering
\includegraphics[width=0.48\linewidth]{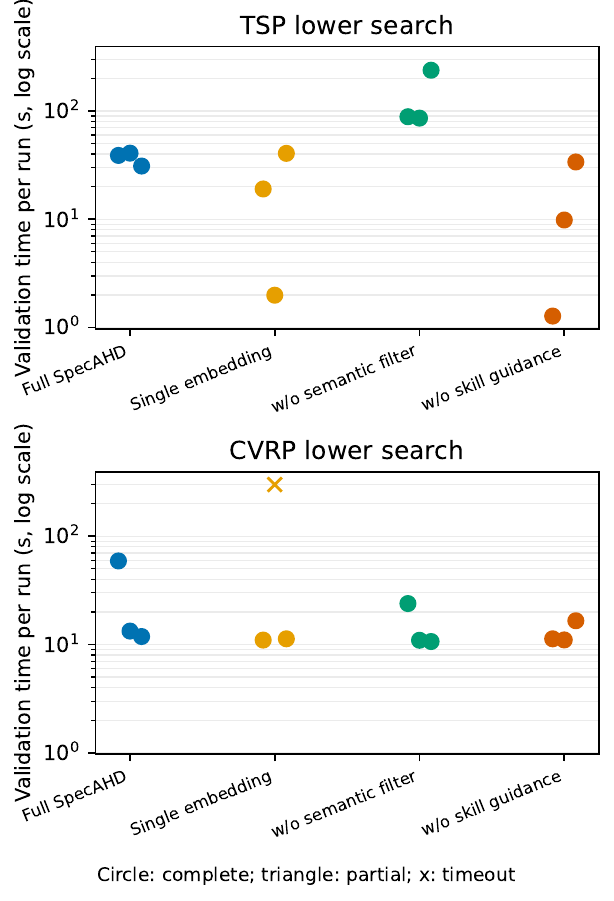}\hfill
\includegraphics[width=0.48\linewidth]{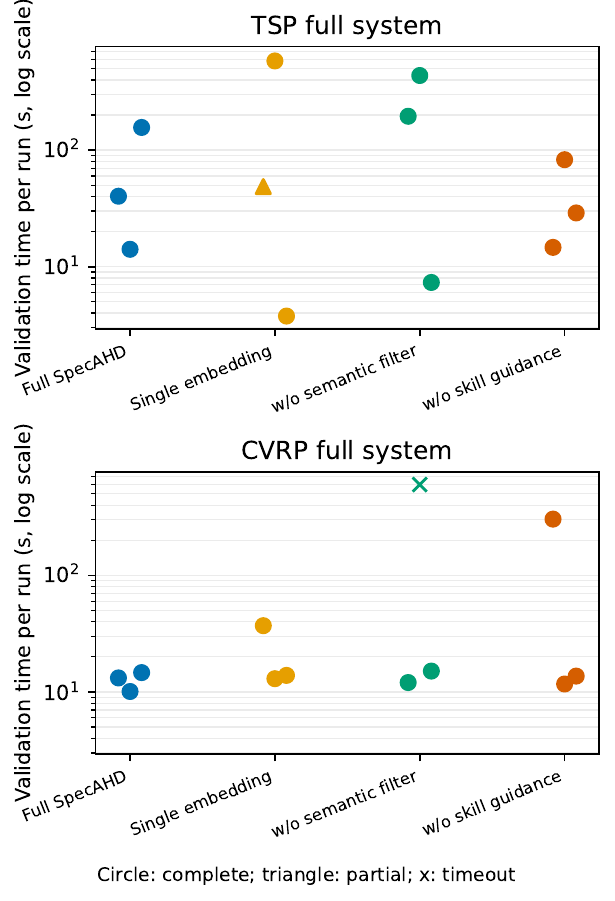}
\caption{Wall-clock validation time for the lower-level search (left) and full system (right) on 60 instances, shown on a logarithmic scale. Circles denote complete runs, triangles partial runs, and crosses timeouts. Embedding models are not loaded during validation, so the measurements reflect complete repair-pipeline execution rather than embedding inference.}
\label{fig:specahd-validation-runtime}
\end{figure}

\section{G. Prompts and Generation Records}

\paragraph{Fixed task contracts.}
The LLM may change the function body but not its name, arguments, or return
type. Upper-program prompts ask for regions rather than complete solutions. Lower-program
prompts ask for one constructive action from the supplied local state.
Table~\ref{tab:supp-program-contracts} summarizes the information exposed by
these interfaces; the code artifact contains their complete signatures and
task descriptions.

\begin{table}[tb]
\centering
\small
\begin{tabular}{@{}lp{0.25\columnwidth}p{0.48\columnwidth}@{}}
\toprule
Problem & Upper object & State visible to the lower program \\
\midrule
TSP & Route-order segments & Current and destination nodes, unvisited nodes, and distances \\
CVRP & Customer groups & Current node, depot, unvisited customers, remaining capacity, demands, and distances \\
VRPTW & Customer groups & CVRP state plus ready times, due times, and service times \\
SDVRP & Customer groups & Current node, depot, residual demand, remaining capacity, and distances \\
\bottomrule
\end{tabular}
\caption{Roles and state exposed by the immutable generation interfaces.}
\label{tab:supp-program-contracts}
\end{table}

For example, the TSP upper-program task instructs the model to return route-order
segments whose first and last nodes remain connected to the unchanged tour. It
explicitly rules out returning a complete tour and asks the program to expose
crossings, detours, density changes, or cluster boundaries that may benefit from
local reconstruction. The CVRP prompt likewise requests customer groups rather
than routes, with spatial coherence and capacity structure as available
signals. The corresponding validator, rather than the prompt, enforces the
hard interface.

\paragraph{Candidate-generation context.}
Mutation and recombination follow the executable parent-rewriting interface of
EoH~\cite{liu2024evolution}. A generation request contains the task contract,
one or more parent programs selected by the operator, their measured scores,
and the operator-specific instruction. The response must contain a natural
language design statement and one executable function. Text outside the
expected fields is discarded before evaluation.

\paragraph{Skill records.}
A retained child's record contains a short description of the code change, the
conditions under which its measured responses improved, observed failure
conditions, and the source program identifiers. Compatible records may be
inserted into a later mutation prompt. They do not alter a candidate score and
are not queried during final routing. The no-skill ablation sets the insertion
count to zero while preserving the same LLM, operators, budget, embeddings, and
semantic filter.

\paragraph{Saved audit trail.}
Each run stores sampled design text, generated source, score vector, operator
type, selected parent and skill identifiers, survivor populations, extracted
skill records, and the running-best score. These files permit candidate-level
reconstruction of the search trajectory without another LLM call.

\paragraph{Configuration identity.}
Synthetic-instance identifiers retain their generation seeds, while the run
index identifies an independent program-search execution. Some legacy launch
directories used batch labels that differ from the final paper variant names.
Before aggregation, a frozen mapping keyed only by search layer, problem, and
source label assigns the paper label. The audit record preserves both labels,
the run index, selected run directory, and correction note. Label assignment does
not inspect the terminal objective, and the same mapping is used for tables,
run-level plots, and completion counts.

\section{H. Representative Generated Programs}

The following excerpts are taken from the frozen TSP and CVRP artifacts used by
the full system. Docstrings and comments are shortened for space; the shown
selection expressions are unchanged. They illustrate the separation between an
upper program and a lower program.

\begin{lstlisting}[language=Python,caption={Representative TSP upper program.}]
def select_tsp_segments(route, coords, distance_matrix,
                        max_segments, max_segment_size):
    selected_segments = []
    for start in range(len(route) - max_segment_size):
        segment = route[start:start + max_segment_size]
        edges = [distance_matrix[segment[i]][segment[i + 1]]
                 for i in range(len(segment) - 1)]
        mean_edge = sum(edges) / len(edges)
        variance = sum((x - mean_edge) ** 2 for x in edges) / len(edges)
        inverse_density = len(segment) / sum(
            1 / (((coords[segment[k]][0] - coords[segment[k + 1]][0]) ** 2
                 + (coords[segment[k]][1] - coords[segment[k + 1]][1]) ** 2))
            for k in range(len(segment) - 1))
        selected_segments.append((segment, variance * inverse_density))
    selected_segments.sort(key=lambda item: item[1], reverse=True)
    return [segment for segment, _ in selected_segments[:max_segments]]
\end{lstlisting}

\begin{lstlisting}[language=Python,caption={Representative TSP lower program.}]
def select_next_node(current_node, destination_node,
                     unvisited_nodes, distance_matrix):
    visited = len(distance_matrix) - len(unvisited_nodes)
    log_base = 2.0
    if visited > 50:
        log_base = np.log(52) * 0.9
    elif visited >= 40:
        log_base *= 0.9
    penalty = 1.0
    if visited > 10:
        penalty = min(1 + distance_matrix[current_node][destination_node] / 100, 5)
    current = np.log(distance_matrix[current_node][unvisited_nodes] + 1)
    target = np.log(distance_matrix[destination_node][unvisited_nodes] + 1)
    score = current / log_base + penalty * (current - target)
    return unvisited_nodes[np.argmin(score)]
\end{lstlisting}

\begin{lstlisting}[language=Python,caption={Representative CVRP upper program.}]
def generate_customer_groups(coords, demands, capacity,
                             distance_matrix, max_groups, max_size):
    density = {i: sum(distance_matrix[i][j] * demands[j]
                      for j in range(len(coords)) if i != j)
               for i in range(len(coords))}
    unassigned = list(range(1, len(coords)))
    groups = []
    while unassigned and len(groups) < max_groups:
        seed = max(unassigned, key=lambda node: density[node])
        unassigned.remove(seed)
        group, load = [seed], demands[seed]
        while unassigned and len(group) < max_size:
            node = min(unassigned, key=lambda item: distance_matrix[seed][item])
            if load + demands[node] > capacity:
                break
            group.append(node)
            load += demands[node]
            unassigned.remove(node)
        groups.append(group)
    return groups
\end{lstlisting}

\begin{lstlisting}[language=Python,caption={Representative CVRP lower program.}]
def select_next_node(current_node, depot, unvisited_nodes,
                     rest_capacity, demands, distance_matrix):
    max_demand = np.max(demands[unvisited_nodes])
    factor = demands[current_node] / max_demand if max_demand > 0 else 0
    factor = max(0.5 / rest_capacity, min(0.5, factor), 0.1)
    capacity_ratio = rest_capacity / (max_demand + 1)
    savings = (distance_matrix[current_node][unvisited_nodes]
               - distance_matrix[depot][unvisited_nodes])
    cost = savings * (1 + factor) * capacity_ratio
    cost += distance_matrix[current_node][unvisited_nodes]
    cost *= 1 / (distance_matrix[current_node][unvisited_nodes] + 1)
    for index in np.argsort(cost):
        node = unvisited_nodes[index]
        if node != depot and rest_capacity >= demands[node]:
            return node
    return None
\end{lstlisting}

\bibliography{references}

%% file: Word/results/benchmark_cvrp_named_test.tex
\begingroup
\small
\captionof{table}{Named-instance CVRP objective values on CVRP-XL, in thousands; lower is better. Per-instance runtimes remain in the released evaluation records.}
\label{tab:cvrp-named-test-results}
\addtocounter{table}{-1}

\endgroup

%% file: Word/results/benchmark_tsp_named_test.tex
\begingroup
\small
\captionof{table}{Named-instance TSP objective values on TSPLIB, in thousands; lower is better. Per-instance runtimes remain in the released evaluation records.}
\label{tab:tsp-named-test-results}
\addtocounter{table}{-1}
%
\endgroup

%% file: Word/results/benchmark_vrptw_named_test.tex
\begingroup
\small
\captionof{table}{Named-instance VRPTW objective values on Solomon/Homberger; lower is better.}
\label{tab:vrptw-named-test-results}
\addtocounter{table}{-1}
%
\endgroup

%% file: Word/results/benchmark_sdvrp_named_test.tex
\begingroup
\small
\captionof{table}{Named-instance SDVRP objective values on SET-1--4; lower is better.}
\label{tab:sdvrp-named-test-results}
\addtocounter{table}{-1}
%
\endgroup

%% file: Word/ablation/paper/validation_ablation_combined.tex
\begin{table}[tb]
\centering
\small
\begin{tabular}{@{}lrrrr@{}}
\toprule
& \multicolumn{2}{c}{Lower-level search} & \multicolumn{2}{c}{Frozen repertoire} \\
\cmidrule(lr){2-3}\cmidrule(l){4-5}
Variant & TSP $\uparrow$ & CVRP $\downarrow$ & TSP $\uparrow$ & CVRP $\uparrow$ \\
& (\%) & (k) & (\%) & (\%) \\
\midrule
Full & \textbf{$5.658$} & \textbf{$59.943$} & \textbf{$-0.337$} & \textbf{$0.578$} \\
Single embedding & $-0.015$ & $83.512$ & $-1.857$ & $-0.350$ \\
w/o semantic filter & $3.783$ & $60.712$ & $-0.986$ & $-1.304$ \\
w/o skill guidance & $4.227$ & $118.517$ & $-1.141$ & $-0.079$ \\
\bottomrule
\end{tabular}
\caption{Held-out ablations of the lower-level search and the full system with a frozen repertoire. Each setting uses three independent runs; cells average completed runs. Appendix~F gives run-level values, sample standard deviations, and completion status.}
\label{tab:eohs-validation-lower-ablation}
\label{tab:eohs-validation-upper-ablation}
\end{table}